\theoremstyle{plain}
\newtheorem{theorem}{Theorem}[section]
\newtheorem{proposition}[theorem]{Proposition}
\theoremstyle{definition}
\theoremstyle{remark}
\icmltitlerunning{ICML 2025 Workshop on Machine Unlearning for Generative AI}
\begin{document}
\twocolumn[
\icmltitle{ContinualFlow: Learning and Unlearning with Neural Flow Matching}

% It is OKAY to include author information, even for blind
% submissions: the style file will automatically remove it for you
% unless you've provided the [accepted] option to the icml2024
% package.

% List of affiliations: The first argument should be a (short)
% identifier you will use later to specify author affiliations
% Academic affiliations should list Department, University, City, Region, Country
% Industry affiliations should list Company, City, Region, Country

% You can specify symbols, otherwise they are numbered in order.
% Ideally, you should not use this facility. Affiliations will be numbered
% in order of appearance and this is the preferred way.
\icmlsetsymbol{equal}{*}

\begin{icmlauthorlist}
\icmlauthor{Lorenzo Simone}{pi,yale}
\icmlauthor{Davide Bacciu}{pi}
\icmlauthor{Shuangge Ma}{yale}
%\icmlauthor{}{sch}
%\icmlauthor{}{sch}
\end{icmlauthorlist}

\icmlaffiliation{pi}{Department of Computer Science, University of Pisa, Pisa, Italy}
\icmlaffiliation{yale}{School of Public Health, Yale University, New Haven, U.S.A.}

\icmlcorrespondingauthor{Lorenzo Simone }{lorenzo.simone@yale.edu}

% You may provide any keywords that you
% find helpful for describing your paper; these are used to populate
% the "keywords" metadata in the PDF but will not be shown in the document
\icmlkeywords{Generative Models, Flow Matching, Unlearning, Energy-Based Models, Optimal Transport}

\vskip 0.3in
]

% this must go after the closing bracket ] following \twocolumn[ ...

% This command actually creates the footnote in the first column
% listing the affiliations and the copyright notice.
% The command takes one argument, which is text to display at the start of the footnote.
% The \icmlEqualContribution command is standard text for equal contribution.
% Remove it (just {}) if you do not need this facility.

%\printAffiliationsAndNotice{}  % leave blank if no need to mention equal contribution
\printAffiliationsAndNotice{} % otherwise use the standard text.

\begin{abstract}
We introduce \emph{ContinualFlow}, a principled framework for targeted unlearning in generative models via Flow Matching. Our method leverages an energy-based reweighting loss to softly subtract undesired regions of the data distribution without retraining from scratch or requiring direct access to the samples to be unlearned. Instead, it relies on energy-based proxies to guide the unlearning process. We prove that this induces gradients equivalent to Flow Matching toward a soft mass-subtracted target, and validate the framework through experiments on 2D and image domains, supported by interpretable visualizations and quantitative evaluations.
\end{abstract}

\section{Introduction}
Machine unlearning, the removal of specific information from trained machine learning (ML) models, has moved from a niche technical concern to a central issue at the intersection of law, ethics, and model deployment. The recent widespread adoption and training of image generation and large language models (LLMs) has significantly expanded the scope and stakes of the field \citep{cooper2024machine}.

Adapting machine unlearning to generative learning introduces distinct conceptual and technical challenges. While discriminative models often allow data traces to be linked directly to outputs, generative models learn complex mappings from latent or prior distributions to data, resulting in entangled and opaque representations. This makes it difficult to isolate the influence of specific inputs, and even when behavior is altered, the model may still output related content through prompting \citep{gpt_editing} or interpolation \citep{interpolation}. In this context, the notion of content erasure remains ambiguous, underscoring the need for precise definitions and tools tailored to generative settings.

\begin{figure}[t]
    \centering
    \includegraphics[width=0.5\textwidth]{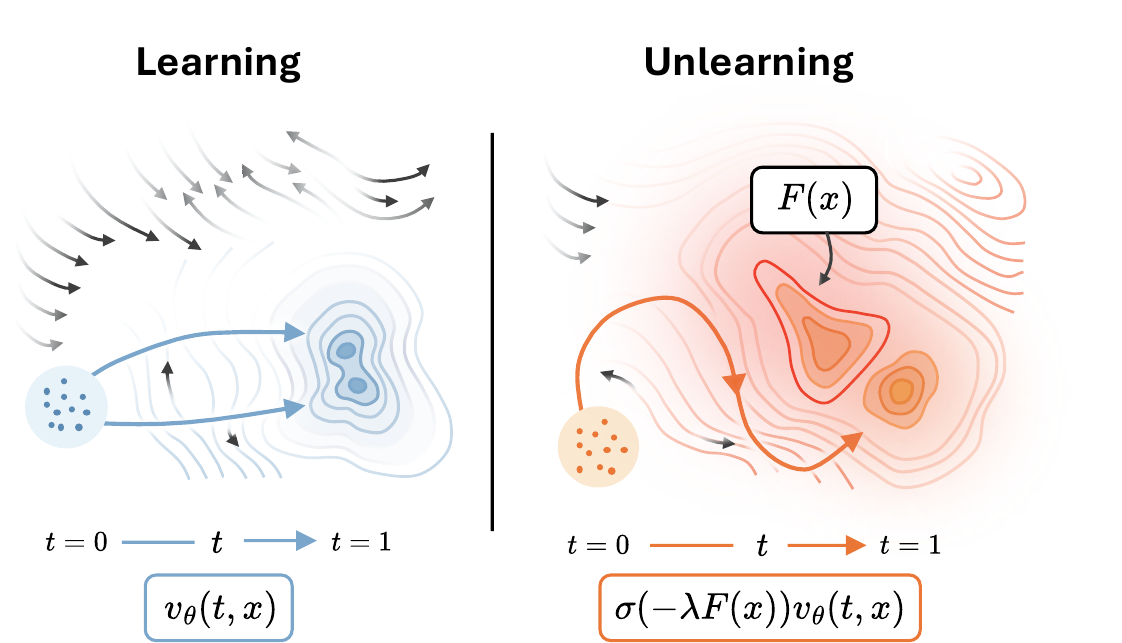}
    \caption{\textbf{Conceptual overview of ContinualFlow.}
\textit{Left:} Standard learning via Flow Matching, where a neural vector field \( v_\theta(t, x) \) maps a base distribution to a known target. \textit{Right:} Energy-guided unlearning, where the flow is softly modulated by a proxy \( \sigma(-\lambda F(x)) \) to steer trajectories away from undesirable regions without access to samples or exact densities.}
    \label{fig:method}
\end{figure}
%TODO Better font for labels in Fig.1 

Recent research on generative machine unlearning primarily focuses on two strategies: (1) output suppression and (2) model patching.

\emph{Output suppression} restricts undesired content at generation time without altering internal representations. For instance, text-to-image models can use guided decoding or inference-time filters \citep{gandikota2023erasing} to avoid producing sensitive outputs. While effective at steering generation, such methods do not remove the underlying knowledge and can often be bypassed with adversarial prompts.

\emph{Model patching} instead modifies model parameters via fine-tuning or targeted updates. This approach offers more durable unlearning, such as preventing a model from reproducing a specific content, and is less susceptible to adversarial prompting. However, generative models may still reproduce conceptually similar samples via alternate pathways, and patching can inadvertently impair unrelated abilities---underscoring the trade-off between effective forgetting and preserving overall model generalization \citep{liu2025rethinking}. For a broader taxonomy of generative unlearning methods and their limitations, see Appendix~\ref{apd:related_work}.

%Convey Message: The new field of vector fields and flows in generative modeling paves the way to new opportunities to define theoretical frameworks for generative model unlearning.

While most unlearning strategies rely on inference-time filters or post-hoc model editing, recent advances in generative modeling open new possibilities for controlling how distributions are learned---and unlearned---through trajectory-based formulations. In particular, Flow Matching (FM)~\citep{lipmanflow} frames generation as a continuous-time transport process, where samples evolve along learned trajectories defined by a neural velocity field. 

In this work, we explore how this geometric formulation can be adapted for unlearning by integrating it with energy functions, which act as scalar potentials assigning higher values to inputs linked to undesirable content. To enable sample-independent unlearning, we formulate Energy-Reweighted Flow Matching (ERFM) as a theoretically grounded extension of the flow matching framework, incorporating energy-derived weights into the training objective. These weights softly downregulate high-risk regions in the data space, effectively steering generative trajectories away from undesired content (Figure \ref{fig:method}).

Beyond their role in defining target regions for suppression, energy functions offer theoretical advantages that support future extensions. In particular, their modularity enables compositionality: distinct objectives can, in principle, be combined to encode evolving unlearning criteria. We further evaluate, through targeted experiments, how the invertibility of the energy function influences flow behavior, demonstrating its impact on the design of adaptive unlearning mechanisms.

\paragraph{Contributions.} Our main contributions are threefold: (1) we introduce a principled extension of Flow Matching that integrates energy functions as soft proxies for unlearning, enabling attenuation of generation in high-risk regions without requiring explicit forget samples; (2) we instantiate this formulation in \emph{ContinualFlow}, a modular framework that supports both learning and unlearning via energy-guided updates of generative flows; and (3) we evaluate our method across both 2D and image-based benchmarks, demonstrating effective unlearning with minimal impact on generation fidelity and training efficiency.

\section{Background: Flow Matching and EBMs}
\paragraph{Continuous-Time Generative Flows.} Flow Matching is a paradigm for training continuous-time generative models by learning a velocity field \( u : [0,1] \times \mathbb{R}^d \rightarrow \mathbb{R}^d \) that defines a transport from a tractable base distribution \( p_0 \) to a target data distribution \( p_1 \). The generative process is modeled as a deterministic flow \( \phi_t(x) \) defined by the ordinary differential equation (ODE):
\begin{equation}
\frac{d}{dt} \phi_t(x) = u(t, \phi_t(x)), \quad \phi_0(x) = x,
\end{equation}
such that \( \phi_1(X_0) \sim p_1 \) when \( X_0 \sim p_0 \).

To make this framework trainable in practice, recent work introduces \emph{Conditional Flow Matching} (CFM) framework~\citep{tong2023improving}, which reformulates the learning of \( u(t, x) \) as a supervised regression task over a family of conditional displacements. In its original form, CFM trains a neural velocity field \( v_\theta(t, x) \) to approximate a prescribed conditional velocity \( u_t(x \mid z) \) via the objective:

\begin{equation*}
\resizebox{0.95\linewidth}{!}{$
\mathcal{L}_{\text{CFM}}(\theta) = \mathbb{E}_{t \sim \mathcal{U}[0,1],\, z \sim q(z),\, x \sim p_t(x \mid z)} \left[ \left\| v_\theta(t, x) - u_t(x \mid z) \right\|^2 \right],$
}
\end{equation*}

where \( z = (x_0, x_1) \sim p_0 \times p_1 \), and the components are defined as:
\begin{small}
\begin{align*}
\psi_t(x_0, x_1) &= (1 - t)x_0 + t x_1 
  \tag*{\text{\scriptsize [Interpolation path]}} \\
p_t(x \mid x_0, x_1) &= \mathcal{N}(x \mid \psi_t(x_0, x_1), \sigma^2 \mathbb{I}) 
  \tag*{\text{\scriptsize [Conditional distribution]}} \\
u_t(x \mid x_0, x_1) &= x_1 - x_0 
  \tag*{\text{\scriptsize [Target velocity]}}
\end{align*}
\end{small}

Rather than directly adopting this approach, we build upon its formulation to derive a modified objective tailored to our unlearning setting.

\paragraph{Energy Functions.} A standard approach to modeling unnormalized distributions is the Boltzmann energy-based representation:
\begin{equation}
    p(x) = \frac{1}{Z} \exp(-F(x)),
\end{equation}
where \( F(x): \mathbb{R}^d \rightarrow \mathbb{R} \) is a scalar energy function, and \( Z = \int \exp(-F(x)) dx \) is the partition function ensuring normalization. Since computing \( Z \) is generally intractable, training often relies on alternatives such as score matching~\citep{hyvarinen2005estimation} or contrastive divergence~\citep{hinton2005}. Sampling from these unnormalized models is particularly challenging in high dimensions and is typically addressed using approximate methods such as Langevin dynamics~\citep{welling2011bayesian}. 

\section{Problem Setting: Learning and Unlearning}

We consider the task of removing the influence of specific data from a trained generative model \( G_\theta \). Let \mbox{\( \mathcal{D}_{\text{full}} = \mathcal{D}_{\text{retain}} \cup \mathcal{D}_{\text{forget}} \)} 
denote the full training dataset, where 
\mbox{\( \mathcal{D}_{\text{retain}} \cap \mathcal{D}_{\text{forget}} = \emptyset \)}. Assume a generative model has been trained on \(\mathcal{D}_{\text{full}}\). We analyze two distinct settings, assuming different degrees of access to $\mathcal{D}_{\text{forget}}$.

\subsection{Case 1: Sample-Based Unlearning with Full Access}

In this setting, we assume direct access to the forget set \(\mathcal{D}_{\text{forget}}\) and aim to erase its influence while preserving performance on \(\mathcal{D}_{\text{retain}}\). Training from scratch typically involves learning from a standard Gaussian prior or an exact density to match  \(\mathcal{D}_{\text{retain}}\), which can be both inefficient and unnecessary. Instead, we show that \emph{Optimal Transport Flow Matching} (OT-FM)~\citep{tong2023improving} can be used to directly model the transition between samples generated by the original model \( G_\theta \) and the retained data \(\mathcal{D}_{\text{retain}}\). This strategy removes the dependency on a predefined prior and simplifies training by focusing on the actual optimal shift in distribution. Further theoretical justification for this approach is provided in Appendix~\ref{apd:otcfm}.

\subsection{Case 2: Unlearning Without Forget Set Access}

Unlike the previous setting---where \(\mathcal{D}_{\text{forget}}\) is explicitly available---this case addresses a more realistic and challenging scenario that is the primary focus of our work. In practice, direct access to such data is often infeasible, particularly in settings involving privacy regulations or dynamic content updates, where new instances of sensitive data may continuously emerge but are not explicitly labeled. Instead, it is far more common to rely on proxy functions, such as classifiers or scoring models, trained to detect undesirable content based on prior knowledge. We treat their output as an unnormalized energy function \(F(x) \propto -\log p(x)\), enabling principled, sample-free updates of generative trajectories without requiring normalization constants or explicit access to the forget distribution.

\section{ContinualFlow: Proposed Methodology}
\label{sec:method}

We introduce a principled transport-based formulation for generative unlearning, leveraging trajectory-level modulation for sensitive content via energy-guided updates.

\subsection{Soft Mass Subtraction via Energy Reweighting}

Let \( q_0(x) \) denote a known, samplable source distribution, and let \( q_f(x) \) represent an unknown distribution over data to be forgotten. We assume access to a scalar energy function \( F(x) \propto -\log q_f(x) \), which scores space configurations based on their association with the forget distribution.

We construct a reweighted surrogate of \( q_0(x) \) by modulating its density as
\[
\tilde{R}(x) \propto q_0(x) \cdot \sigma(-\lambda F(x)),
\]
where \( \sigma(z) = \frac{1}{1 + e^{-z}} \) is the sigmoid function and \( \lambda > 0 \) controls the suppression sensitivity. The term \( \sigma(-\lambda F(x)) \) smoothly downweights high-energy regions, reducing the influence of samples likely originating from \( q_f \).

The corresponding normalized target distribution is:
\[
\tilde{q}_1(x) = \frac{1}{Z} \tilde{R}(x), \quad \text{with} \quad Z = \int q_0(x)\, \sigma(-\lambda F(x)) \, dx.
\]

We interpret \( \tilde{q}_1 \) as the terminal marginal of a generative flow that selectively suppresses regions aligned with \( q_f \), without requiring direct access to \( q_f \) itself. This yields a continuous, differentiable relaxation for unlearning via smooth density reweighting rather than thresholding or sample exclusion.

\subsection{Energy-Reweighted Flow Matching Objective}

We now formalize the training objective that aligns a pretrained generative model on \( \mathcal{D}_{\text{full}} \) with the reweighted target distribution \( \tilde{q}_1(x) \). Unlike inference-time approaches such as classifier guidance~\citep{dhariwal2021diffusion}—which steer generation away from undesired regions using external gradients while keeping the original model unchanged—our method directly modifies the training loss by reweighting samples based on their association with the forget distribution. This reweighting shapes the velocity field to avoid high-energy regions and favor trajectories toward retained content. 

The following theorem shows that our objective is equivalent (up to a constant) to CFM toward \( \tilde{q}_1(x) \), using only samples from the base distribution \( q_0 \).

\begin{theorem}
\label{thm:erfm_equiv}
Let \( q_0 \) be a base distribution and \( F(x) \propto -\log q_f(x) \) an energy function for an unknown forget distribution \( q_f \). Define the reweighted target as \( \tilde{q}_1(x) \propto q_0(x) \cdot \sigma(-\lambda F(x)) \), with \( \lambda > 0 \). Then, the Energy-Reweighted Flow Matching loss
\begin{multline}
\mathcal{L}_{\mathrm{ERFM}}(\theta) =
\mathbb{E}_{x_0, x_1 \sim q_0,\,
t \sim \mathcal{U}[0,1],\,
x \sim p_t(x \mid x_0, x_1)}
\\
\left[ \sigma(-\lambda F(x_1)) \cdot
\left\| v_\theta(t, x) - u_t(x \mid x_0, x_1) \right\|^2 \right]
\end{multline}
satisfies
\[
\nabla_\theta \mathcal{L}_{\mathrm{ERFM}}(\theta)
= C \cdot \nabla_\theta \mathcal{L}_{\mathrm{CFM}}^{q_0 \to \tilde{q}_1}(\theta)
\]
for some constant \( C > 0 \).
\end{theorem}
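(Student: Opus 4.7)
The plan is to recognize $\mathcal{L}_{\mathrm{ERFM}}$ as a self-normalizing importance-sampling rewrite of $\mathcal{L}_{\mathrm{CFM}}^{q_0 \to \tilde{q}_1}$, in which the sigmoid weight plays exactly the role of the Radon-Nikodym derivative between the reweighted target $\tilde{q}_1$ and the base distribution $q_0$. The whole argument reduces to a single change of measure on the $x_1$ marginal, and the gradient identity then follows because the resulting proportionality constant is strictly positive and independent of $\theta$.

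Concretely, I would first expand $\mathcal{L}_{\mathrm{CFM}}^{q_0 \to \tilde{q}_1}(\theta)$ under the independent coupling $q(z) = q_0(x_0)\,\tilde{q}_1(x_1)$ prescribed by the CFM framework recalled in the background section, namely
\[
\mathcal{L}_{\mathrm{CFM}}^{q_0 \to \tilde{q}_1}(\theta) = \mathbb{E}\!\left[\| v_\theta(t,x) - u_t(x \mid x_0,x_1)\|^2\right],
\]
where the expectation is over $x_0 \sim q_0$, $x_1 \sim \tilde{q}_1$, $t \sim \mathcal{U}[0,1]$, and $x \sim p_t(x \mid x_0,x_1)$. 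Next I would invoke the definition $\tilde{q}_1(x_1) = Z^{-1} q_0(x_1)\,\sigma(-\lambda F(x_1))$ to substitute the density ratio $d\tilde{q}_1/dq_0 = Z^{-1}\sigma(-\lambda F(\cdot))$, which converts the outer sampling law on $x_1$ from $\tilde{q}_1$ back to $q_0$ and pulls the weight $\sigma(-\lambda F(x_1))$ inside the integrand. The result is precisely $Z^{-1}\mathcal{L}_{\mathrm{ERFM}}(\theta)$, so that $\mathcal{L}_{\mathrm{ERFM}}(\theta) = Z \cdot \mathcal{L}_{\mathrm{CFM}}^{q_0 \to \tilde{q}_1}(\theta)$.

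Differentiating both sides in $\theta$ then yields the claimed identity with $C = Z = \int q_0(x)\,\sigma(-\lambda F(x))\,dx$, which is strictly positive since $\sigma > 0$ and $q_0$ is a proper density. I expect the only delicate point to be not a computation but a bookkeeping verification: one must check that $Z$ is truly independent of $\theta$ (immediate, since $F$ and $q_0$ are fixed data of the problem), and that the conditional path $p_t(x \mid x_0,x_1)$ and the target velocity $u_t(x \mid x_0,x_1)$ are the same objects in both losses, which holds because they depend only on the pair $(x_0,x_1)$ and not on the marginal law that produced $x_1$. Once those are in place, $Z$ commutes with $\nabla_\theta$ and the conclusion is immediate.
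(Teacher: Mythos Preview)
Your proposal is correct and follows essentially the same approach as the paper's own proof: both recognize that $\tilde{q}_1(x_1)/q_0(x_1) \propto \sigma(-\lambda F(x_1))$ and use this to rewrite $\mathcal{L}_{\mathrm{CFM}}^{q_0 \to \tilde{q}_1}$ as an importance-weighted expectation over $x_0, x_1 \sim q_0$, yielding the proportionality to $\mathcal{L}_{\mathrm{ERFM}}$ before differentiating. If anything, you are slightly more explicit than the paper in identifying the constant as $C = Z$ and in verifying that $Z$ is $\theta$-independent and that the conditional path and target velocity are shared between the two losses.
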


Each training pair \( (x_0, x_1) \sim q_0 \times q_0 \) is weighted by \( \sigma(-\lambda F(x_1)) \), yielding an importance-weighted objective that guides the flow away from high-energy regions. This enables efficient training toward \( \tilde{q}_1 \) without requiring explicit access to \( q_f \). See Algorithm~\ref{alg:continualflow} for the full procedure.

%\vspace{-0.5mm}

\vspace{0.02cm}
\begin{algorithm}[hbtp]
    \caption{\textit{ContinualFlow}: Training procedure}
    \label{alg:continualflow}
    \begin{footnotesize}
    \begin{algorithmic}[1]
        \STATE \textbf{Input:} Initial distribution $q_0(x)$, energy $F(x)$, model $v_\theta$, steps $S$, batch size $B$, learning rate $\eta$, scale $\lambda$
        
        \STATE \textbf{Training:}
        \FOR{$i = 1 \text{ to } S$}
            \STATE Sample $\{x_0^{(j)}\}_{j=1}^B \sim q_0$;  $\{x_1^{(j)}\}_{j=1}^B \sim q_0$
            \STATE Sample $\{t^{(j)}\}_{j=1}^B \sim \mathcal{U}(0,1)$
            \STATE Interpolate $x_t^{(j)} = (1 - t^{(j)}) x_0^{(j)} + t^{(j)} x_1^{(j)}$
            \STATE Define weights $w^{(j)} = \sigma(-\lambda F(x_1^{(j)}))$
            \STATE Define targets $\Delta x^{(j)} = x_1^{(j)} - x_0^{(j)}$
            \STATE Compute loss: 
            \[
             \mathcal{L} = \frac{\sum_{j=1}^B w^{(j)} \|v_\theta(x_t^{(j)}, t^{(j)}) - \Delta x^{(j)}\|^2}{\sum_{j=1}^B w^{(j)}}
            \]
            \STATE Update $\theta \leftarrow \theta - \eta \nabla_\theta \mathcal{L}$
        \ENDFOR
    \end{algorithmic}
    \end{footnotesize}
\end{algorithm}

\begin{table*}[t]
\centering
\caption{Comparison of unlearning performance across methods and datasets.}
\label{tab:results}
\resizebox{0.77\textwidth}{!}{
\begin{tabular}{llccccc}
\toprule
\textbf{Dataset} & \textbf{Method} 
& \multicolumn{2}{c}{\textbf{Retention}} 
& \multicolumn{2}{c}{\textbf{Forgetting}} 
& \multicolumn{1}{c}{\textbf{Efficiency}} \\
%\cmidrule(r){3-4} \cmidrule(r){5-6} \cmidrule(r){7-8}
& & MMD ($\downarrow$) & Accuracy ($\uparrow$) 
  & Forget Rate ($\downarrow$) & Leakage ($\downarrow$) 
  & Train Time (s) \\
\midrule

\multirow{3}{*}{2D} 
& Retrain (GT)    & 0.0157 $\pm$ 0.0104 & 0.9999 $\pm$ 0.0002 & 0.0007 $\pm$ 0.0007 & 0.0255 $\pm$ 0.0342 & 64.30 $\pm$ 1.25  \\
& Fine-tuning   & 0.0157 $\pm$ 0.0104 & 0.9999 $\pm$ 0.0002 & 0.0007 $\pm$ 0.0007 & 0.0255 $\pm$ 0.0342 & 10.42 $\pm$ 0.75  \\
\cmidrule{2-7}
& Ours (CFlow)   & 0.0162 $\pm$ 0.0136 & 0.9999 $\pm$ 0.0002 & 0.0155 $\pm$ 0.0107 & 0.0385 $\pm$ 0.0375 & 50.07 $\pm$ 1.56 \\
\midrule

\multirow{3}{*}{MNIST} 
& Retrain (GT) & 0.0004 $\pm$ 0.0000 & 0.9861 $\pm$ 0.0098 & 0.0050 $\pm$ 0.0012 & 0.0108 $\pm$ 0.0009 & 300.00 $\pm$ 15.00 \\ 
 
& Fine-tuning   & 0.0039 $\pm$ 0.0004 & 0.9551 $\pm$ 0.0167 & 0.0143 $\pm$ 0.0032 & 0.0214 $\pm$ 0.0028 & 92.86 $\pm$ 6.12 \\
\cmidrule{2-7}
& Ours (CFlow) & 0.0020 $\pm$ 0.0003 & 0.9673 $\pm$ 0.0153 & 0.0005 $\pm$ 0.0005 & 0.0015 $\pm$ 0.0003 & 158.74 $\pm$ 11.34\\
\bottomrule

\multirow{3}{*}{CIFAR-10} 
& Retrain (GT) & 0.0056 $\pm$ 0.0004 & 0.8920 $\pm$ 0.0000 & 0.1127 $\pm$ 0.0078 & 0.1546 $\pm$ 0.0073 & \hspace{0.1cm} 802.37 $\pm$ 291.54 \\
& Fine-tuning   & 0.0077 $\pm$ 0.0016 & 0.9005 $\pm$ 0.0068 & 0.2157 $\pm$ 0.0095 & 0.2401 $\pm$ 0.0065 & 252.89 $\pm$ 18.72 \\

\cmidrule{2-7}
& Ours (CFlow)   & 0.0064 $\pm$ 0.0005 & 0.8847 $\pm$ 0.0077 & 0.1704 $\pm$ 0.0125 & 0.1748 $\pm$ 0.0109 & 427.15 $\pm$ 34.68 \\
\bottomrule
\end{tabular}
}
\end{table*}

\newpage
\section{Results}
We evaluate \textit{ContinualFlow} on structured 2D benchmarks and image domains to assess its ability to unlearn specific content while preserving generative performance. Starting from a model \( G_\theta \) trained on \( \mathcal{D}_{\text{full}} \), each task targets retention of \( \mathcal{D}_{\text{retain}} \) while suppressing \( \mathcal{D}_{\text{forget}} \), with performance evaluated against fine-tuning and retraining baselines. Our 2D settings include: \textbf{(1)} \textit{Circles}, keeping the inner ring; \textbf{(2)} \textit{Moons}, lower arc retained; \textbf{(3)} \textit{Gaussians}, odd-numbered clusters; and \textbf{(4)} \textit{Checkerboard}, top row and last column removed. Table~\ref{tab:results} summarizes the averaged results, with full metrics available in Appendix~\ref{apd:extensive_res}. We achieve performance comparable to retraining across diverse settings, including MNIST digit removal and CIFAR-10 class suppression.
%The proposed method achieves performance comparable to ground-truth retraining across diverse settings, including MNIST digit removal and CIFAR-10 class suppression.
\begin{figure}[H]
    \centering
    \includegraphics[width=0.47\textwidth]{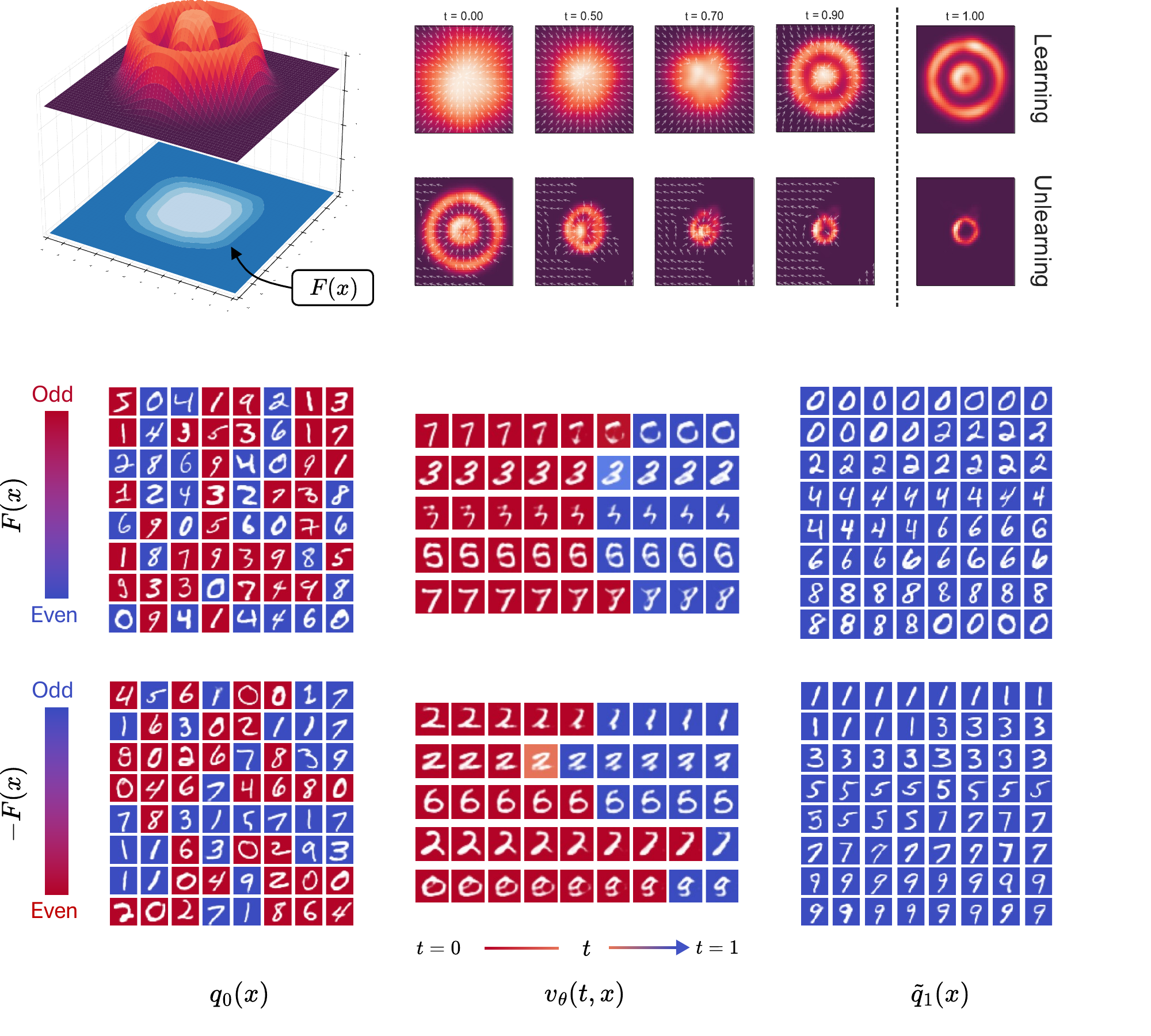}
    \caption{\textbf{Top:} 2D Circles. A model trained on both rings \(\mathcal{D}_{\text{full}}\) is guided by \(F(x)\) (blue surface) to suppress the outer ring \(\mathcal{D}_{\text{forget}}\) and generate inner-ring.  
\textbf{Bottom:} MNIST. Starting from a model trained on all digits. The top row uses \(F(x)\) to guide trajectories toward even digits (\(\mathcal{D}_{\text{retain}}\)), suppressing odds; the bottom row inverts the energy, redirecting flow to generate odd digits.} \label{fig:mnist-inversion} \end{figure} 

In Figure~\ref{fig:mnist-inversion}, we illustrate two settings. In the 2D Circles task (top), a model trained on both rings (\(\mathcal{D}_{\text{full}}\)) is guided by an energy function to suppress the outer ring (\(\mathcal{D}_{\text{forget}}\)), yielding a new model that generates only the inner ring (\(\mathcal{D}_{\text{retain}}\)). 

A distinctive property of our framework is the composability of energy functions, potentially allowing modular control over unlearning dynamics; in this work, we specifically investigate their invertibility, showing how reversing energy guidance recovers forgotten content without direct access. As an illustrative example on MNIST (Figure~\ref{fig:mnist-inversion}, bottom), we start with a model trained on all digits and apply \(F(x)\)  which penalizes odd digits, retaining evens. We then invert this energy and reapply our method, recovering a model that suppresses even digits and regenerates the odd class—despite no longer accessing its samples.  Reversing the energy reorients the learned flow toward \(\mathcal{D}_{\text{forget}}\), recovering this distribution without explicit sample access. This unlocks key capabilities for privacy-aware generative modeling, including augmenting sensitive classes for fairness evaluation, dynamically modulating access to protected regions, and enabling traceable, reversible unlearning for accountability in compliance-critical settings.

\vspace{-3mm}
\section{Conclusion}
We recast generative unlearning as a distributional transport problem and introduce \textit{ContinualFlow}, a framework that reshapes generative behavior by modulating flow trajectories with energy-based weights to suppress undesired regions. Through the integration of soft importance weights into training, it enables targeted unlearning without explicit forget sets or full retraining. This work frames unlearning not as an intervention on model weights but as a distributional shift through modulation of generative flows.

\noindent\textbf{Future Work.} A key challenge is learning energy functions that faithfully align with the forget distribution, as this alignment affects unlearning performance. Beyond class-level or binary proxies, extending to semantic, or geometric formulations is a promising direction. Moreover, the compositionality of energy functions can support modular objectives, particularly in continual and multi-stage unlearning.

\clearpage
\bibliography{main}
\bibliographystyle{icml2024}

%%%%%%%%%%%%%%%%%%%%%%%%%%%%%%%%%%%%%%%%%%%%%%%%%%%%%%%%%%%%%%%%%%%%%%%%%%%%%%%
%%%%%%%%%%%%%%%%%%%%%%%%%%%%%%%%%%%%%%%%%%%%%%%%%%%%%%%%%%%%%%%%%%%%%%%%%%%%%%%
% APPENDIX
%%%%%%%%%%%%%%%%%%%%%%%%%%%%%%%%%%%%%%%%%%%%%%%%%%%%%%%%%%%%%%%%%%%%%%%%%%%%%%%
%%%%%%%%%%%%%%%%%%%%%%%%%%%%%%%%%%%%%%%%%%%%%%%%%%%%%%%%%%%%%%%%%%%%%%%%%%%%%%%
\newpage
\appendix
\onecolumn

\begin{table*}[t]
\centering
\footnotesize
\setlength{\tabcolsep}{6pt}
\renewcommand{\arraystretch}{1.0}
\caption{Summary of symbols used throughout the paper.}
\label{tab:notation}
\begin{tabular}{ll|ll}
\toprule
\textbf{Symbol} & \textbf{Description} & \textbf{Symbol} & \textbf{Description} \\
\midrule
$q_0(x)$ & Source distribution (e.g., model output or prior) &
$q_f(x)$ & Forget distribution (to be unlearned, unknown) \\
$F(x)$ & Energy function, $F(x) \propto -\log q_f(x)$ &
$\lambda$ & Suppression sensitivity factor \\
$\tilde{R}(x)$ & Unnormalized reweighted target &
$\tilde{q}_1(x)$ & Soft mass-subtracted target \\
$v_\theta(t, x)$ & Learned velocity field &
$u_t(x \mid x_0, x_1)$ & Target velocity\\
$p_t(x \mid x_0, x_1)$ & Interpolated conditional &
$\mathcal{L}_{\text{CFM}}$ & Flow Matching loss \\
$Z$ & Normalization constant & $\mathcal{L}_{\text{ERFM}}$ & Energy-Reweighted FM loss \\
\bottomrule
\end{tabular}
\end{table*}

\section{Additional Related Work on Generative Machine Unlearning}
\label{apd:related_work}

To contextualize recent advances in generative model unlearning, we organize key works by their approach and focus area, emphasizing methods that directly tackle unlearning or targeted removal in generative settings.

\textbf{Output Suppression Methods.} These techniques act during or after generation to prevent disallowed content without modifying the model’s underlying parameters. A common example is post-hoc filtering, where language or image models employ classifiers or rule-based systems to detect and block sensitive or restricted outputs. For instance, \citet{gandikota2023erasing} propose Safe Latent Diffusion (SLD), which augments a diffusion model with nudity and violence detectors that steer the denoising trajectory away from generating Not Safe for Work (NSFW) or graphic content. While such inference-time guardrails significantly reduce the likelihood of unwanted outputs, they can often be dialed back or circumvented and may distort the output if applied too aggressively \citep{schramowski2023safe}. In LLMs, suppression is typically implemented through controlled decoding or fine-tuned refusal behaviors that steer models away from certain topics. Techniques such as prompt-based “safe completion” or RLHF operate as output-level controls. While effective at reducing harmful generations, they act more as alignment tools than true unlearning: the underlying knowledge remains encoded, and suppressed content may still be elicited through alternative prompts \citep{gandikota2023erasing}. 

These limitations motivate the complementary class of model-editing approaches discussed next.

\noindent \textbf{Model Editing and Target Removal Techniques.} Beyond suppression, another line of work seeks to edit model parameters directly to remove undesired concepts. In diffusion models, \citet{kumari2023ablating} propose a redirection approach, re-mapping target concepts to generic anchors to suppress stylistic outputs, for instance transforming “in the style of Artist X” into a neutral painting style. An alternative to suppression is to directly modify the model’s internal parameters to remove undesired concepts. However, model patching often risks collateral damage, degrading unrelated capabilities. To mitigate this, \citet{zhang2024defensive} and \citet{gandikota2024unified} introduce adversarial preservation losses that regularize neuron-level effects during unlearning. Fine-tuning on counterfactual text with KL regularization, or ablating neurons tied to memorized content, are common strategies for unlearning in LLMs. 

These challenges highlight a core limitation of current editing approaches: they rely on brittle interventions that often lack robustness, theoretical guarantees, and generalizability across tasks or modalities \citep{wu2023depn, xu2025relearn}.

\noindent \textbf{Continual and Online Unlearning.} Continual unlearning extends the classical unlearning setup to scenarios where deletions must be performed iteratively, without disrupting the model’s overall performance or utility. \citet{heng2023selective} propose Selective Amnesia, which leverages continual learning tools such as Elastic Weight Consolidation \citep{kirkpatrick2017overcoming} and retained data replay to remove target concepts while preserving overall model performance. However, these methods typically address fixed, one-shot unlearning tasks and rely on access to the data to be removed. \citet{thakral2025continual} extend this to the incremental setting, introducing a formal definition of continual generative unlearning. They show that repeated removals can cause \textit{generalization erosion}, where the model's output quality deteriorates even for unrelated prompts.

Continual unlearning remains an open challenge, raising questions about partial forgetting, compositionality, and robustness under evolving constraints. Our approach addresses this by enabling flow-based updates toward new targets while softly subtracting prior content, either using direct access to data distributions or via modular energy proxies. By design, this mechanism accommodates incremental and modular changes without compromising overall model coherence. This compositionality makes it especially well-suited for real-world deployments where unlearning needs to be flexible, efficient, and repeatable. A summary of the symbols used throughout this section is provided in Table~\ref{tab:notation}.

\newpage
\section{Flow Matching Toward Soft Mass-Subtracted Distributions.}
\label{apd:proof}

\subsection{Proof of Theorem ~\ref{thm:erfm_equiv}}

Let \( q_0(x) \) be a known base density, and let \( F(x) \propto -\log q_f(x) \) be an energy function for an unknown forget distribution \( q_f(x) \). We define the soft mass-subtracted target as
\[
\tilde{R}(x) \propto q_0(x) \cdot \sigma(-\lambda F(x)),
\]
and let \( \tilde{q}_1(x) = \tilde{R}(x)/Z \) denote its normalized version.

The objective is to show that the ERFM objective yields gradients equivalent, up to a positive scalar, to those of Conditional Flow Matching toward \( \tilde{q}_1 \).

Restating the ERFM loss:
\[
\mathcal{L}_{\mathrm{ERFM}}(\theta) =
\mathbb{E}_{x_0, x_1 \sim q_0;\, t \sim \mathcal{U}[0,1];\, x \sim p_t(x \mid x_0, x_1)}
\left[ \sigma(-\lambda F(x_1)) \cdot
\left\| v_\theta(t, x) - u_t(x \mid x_0, x_1) \right\|^2 \right].
\]

By definition of \( \tilde{q}_1 \), we can express the sampling as a reweighted importance sampling from \( q_0 \):
\[
\frac{\tilde{q}_1(x_1)}{q_0(x_1)} \propto \sigma(-\lambda F(x_1)).
\]

Hence, the CFM loss from \( q_0 \) to \( \tilde{q}_1 \) can be written as:
\[
\mathcal{L}_{\text{CFM}}^{q_0 \to \tilde{q}_1}(\theta) =
\mathbb{E}_{x_0, x_1 \sim q_0;\, t, x} 
\left[ \frac{\tilde{q}_1(x_1)}{q_0(x_1)} \cdot
\left\| v_\theta(t, x) - u_t(x \mid x_0, x_1) \right\|^2 \right],
\]
\[
= C^{-1} \cdot \mathcal{L}_{\mathrm{ERFM}}(\theta)
\quad \text{for some } C > 0,
\]
which implies:
\[
\nabla_\theta \mathcal{L}_{\mathrm{ERFM}}(\theta) = C \cdot \nabla_\theta \mathcal{L}_{\mathrm{CFM}}^{q_0 \to \tilde{q}_1}(\theta),
\]
as claimed in Theorem~\ref{thm:erfm_equiv}.
\qed

\subsection{Classifier-Based Energy Approximation}

Suppose we are given a probabilistic binary classifier $C(x) \in [0, 1]$ trained to distinguish samples from the base distribution $q_0(x)$ and an unlearned class $q_f(x)$. Under Bayes-optimality, the classifier estimates:
\[
C(x) = \mathbb{P}(x \in q_f \mid x) = \frac{q_f(x)}{q_0(x) + q_f(x)}
\]
We aim to show that such a classifier defines a valid energy function for our method.

\begin{proposition}
Let $C(x)$ be the Bayes-optimal probabilistic classifier distinguishing $q_f$ from $q_0$. Then the logit score
\[
F(x) := -\log\left( \frac{C(x)}{1 - C(x)} \right)
\]
defines an energy function such that the soft-mass subtracted target
\[
\tilde{R}(x) \propto q_0(x) \cdot \sigma(-\lambda F(x))
\]
is equivalent to
\[
\tilde{R}(x) \propto q_0(x) \cdot \frac{(1 - C(x))^\lambda}{(1 - C(x))^\lambda + C(x)^\lambda}
\]
\end{proposition}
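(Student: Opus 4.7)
The claim is a direct algebraic identity relating the sigmoid of a scaled classifier logit to a ratio of powers of class probabilities; Bayes-optimality of $C$ plays no role in the identity itself and enters only to justify that the resulting $F$ qualifies as an energy function in the sense required by Theorem~\ref{thm:erfm_equiv}. My plan proceeds in three short steps.

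First, I would substitute $F(x) = -\log\bigl(C(x)/(1-C(x))\bigr)$ into $\sigma(-\lambda F(x))$ and unfold using $\sigma(z) = 1/(1+e^{-z})$ together with $a^\lambda = e^{\lambda \log a}$. The nested exponentials collapse, so the sigmoid factor becomes an expression of the form $1/(1 + ((1-C)/C)^\lambda)$; multiplying numerator and denominator by an appropriate power of $C$ or $1-C$ converts this to the ratio-of-powers form on the right-hand side of the proposition. Second, I would multiply through by $q_0(x)$; since $\tilde R$ is defined only up to a proportionality constant, the sigmoid identity lifts immediately to the claimed identity for $\tilde R(x)$, with constants absorbed into the normalization $Z$.

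Third, I would verify that $F$ really is an energy function for $q_f$ in the sense required by the ERFM framework. By Bayes-optimality, $C(x)/(1-C(x)) = q_f(x)/q_0(x)$, so $F(x) = -\log q_f(x) + \log q_0(x)$; this differs from $-\log q_f$ by an additive function of $x$ that contributes only a multiplicative $q_0(x)^\lambda$ factor to the reweighting, which can be absorbed into the base factor $q_0(x)$ already present in $\tilde R$. Hence $F$ is a valid energy in the proportionality sense used in Section~\ref{sec:method}, and the conclusion of Theorem~\ref{thm:erfm_equiv} applies. The main obstacle, such as it is, is purely bookkeeping: the symmetry $\sigma(-z) = 1 - \sigma(z)$ swaps $C^\lambda$ and $(1-C)^\lambda$ in the numerator, so one has to carefully align the sign convention in the definition of $F$ with the intended direction of suppression so that the algebra lands on the side of the identity stated in the proposition rather than its complement. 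Beyond that sign check, the proof is essentially one line of algebra.
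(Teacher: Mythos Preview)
Your first two steps match the paper's proof: the paper also just substitutes and simplifies, first invoking Bayes-optimality to record $C/(1-C)=q_f/q_0$ (hence $F=-\log(q_f/q_0)$), then writing $\sigma(-\lambda F(x))=1/\bigl(1+(C/(1-C))^\lambda\bigr)$ and clearing denominators. Your remark that Bayes-optimality is irrelevant to the algebraic identity itself is correct; the paper uses it only to give $F$ a density-ratio interpretation, not to drive the computation.

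Your third step, however, contains a genuine error. You claim the additive $\log q_0(x)$ term separating $F$ from $-\log q_f$ ``contributes only a multiplicative $q_0(x)^\lambda$ factor to the reweighting'' that can be absorbed into $\tilde R$. That would be true for a Boltzmann weight $e^{-\lambda F}$, but the reweighting here is $\sigma(-\lambda F)$, and the sigmoid does not factor under additive shifts of its argument: there is no function $h$ with $\sigma(-\lambda(F+g))=h(g)\,\sigma(-\lambda F)$. So the classifier-derived $F$ is \emph{not} of the form $-\log q_f+\text{const}$, and you cannot reduce it to the hypothesis of Theorem~\ref{thm:erfm_equiv} by absorption. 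Fortunately the proposition does not require this; the paper's proof makes no such argument, and your first two steps already establish what is actually claimed. Your sign-bookkeeping caveat is also warranted: with $F$ as defined, direct substitution gives $1/\bigl(1+((1-C)/C)^\lambda\bigr)=C^\lambda/(C^\lambda+(1-C)^\lambda)$, the complement of the stated form, so the alignment you flag is not merely cosmetic.
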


\begin{proof}
By the classifier definition, we have:
\[
\frac{C(x)}{1 - C(x)} = \frac{q_f(x)}{q_0(x)} \quad \Rightarrow \quad F(x) = -\log\left( \frac{q_f(x)}{q_0(x)} \right)
\]
Hence:
\[
\sigma(-\lambda F(x)) = \frac{1}{1 + \left( \frac{C(x)}{1 - C(x)} \right)^\lambda}
= \frac{(1 - C(x))^\lambda}{(1 - C(x))^\lambda + C(x)^\lambda}
\]
This weight decreases as the classifier becomes more confident that $x$ belongs to $q_f$, suppressing contributions in the loss, and recovering the same behavior as energy-based reweighting.
\end{proof}

\section{Appendix: Background on Optimal Transport Flow Matching with Empirical Distributions}
\label{apd:otcfm}

\paragraph{Optimal Transport Foundations.}
Given two probability distributions \( q_0 \) and \( q_1 \) over \( \mathbb{R}^d \), the \textit{static} 2-Wasserstein distance between them is defined as:
\begin{equation}
    W_2^2(q_0, q_1) = \inf_{\pi \in \Pi(q_0, q_1)} \int_{\mathbb{R}^d \times \mathbb{R}^d} \|x_0 - x_1\|^2 \, d\pi(x_0, x_1),
\end{equation}
where \( \Pi(q_0, q_1) \) denotes the set of all couplings with marginals \( q_0 \) and \( q_1 \).

The \textit{dynamic} formulation of optimal transport (Benamou-Brenier \citep{Benamou2021OptimalTM}) seeks a time-varying density \( p_t \) and vector field \( u_t \) that minimize:
\begin{equation}
    \inf_{p_t, u_t} \int_0^1 \int_{\mathbb{R}^d} p_t(x) \|u_t(x)\|^2 \, dx\, dt,
\end{equation}
subject to the continuity equation
\[
\frac{\partial p_t}{\partial t} + \nabla \cdot (p_t u_t) = 0,
\]
with boundary conditions \( p_0 = q_0 \), \( p_1 = q_1 \).

\paragraph{Flow Matching with Empirical Marginals.}
In the CFM framework~\citep{tong2023improving}, the velocity field \( v_\theta(t,x) \) is trained to match an analytically defined vector field \( u_t(x \mid z) \), using samples from a conditional distribution \( p_t(x \mid z) \). This is generalized further in the OT-CFM formulation, where the coupling \( z = (x_0, x_1) \) is sampled according to an optimal transport plan \( \pi(x_0, x_1) \), instead of an independent product \( q_0(x_0) q_1(x_1) \).

Given such a coupling, the interpolation and associated velocity fields are:
\begin{align}
    \psi_t(x_0, x_1) &= (1 - t)x_0 + t x_1, \\
    p_t(x \mid x_0, x_1) &= \mathcal{N}(x \mid \psi_t(x_0, x_1), \sigma^2 \mathbb{I}), \\
    u_t(x \mid x_0, x_1) &= x_1 - x_0.
\end{align}

\paragraph{OT-CFM Justification.}
Proposition 3.4 of~\citet{tong2023improving} proves that when the coupling \( \pi(x_0, x_1) \) is the OT plan and \( \sigma^2 \rightarrow 0 \), the marginal flow field \( u_t(x) \) minimizes the dynamic OT objective between \( q_0 \) and \( q_1 \). Notably, this formulation imposes no assumption that \( q_0 \) must be a Gaussian or that its density is known.

This formulation enables the construction of a transport path between the empirical distribution of samples generated by a pretrained model \( G_\theta \), denoted \( q_0 \), and the empirical distribution over the retained dataset \( \mathcal{D}_{\text{retain}} \), denoted \( q_1 \), without requiring access to their explicit densities. Leveraging this coupling, we can train a new velocity field \( v_{\tilde{\theta}} \) to model the flow between \( q_0 \) and \( q_1 \), thereby avoiding the need to retrain from a fixed prior such as a standard Gaussian. This yields a modular approach for implementing targeted unlearning and supporting continual distributional updates.

\paragraph{Minibatch Approximation.}
In practical settings where the exact OT plan is computationally expensive, minibatch OT~\citep{Nguyen2021ImprovingMO} can be used to approximate \( \pi(x_0, x_1) \). Empirically, this yields competitive flows and maintains convergence benefits without incurring the overhead of solving full OT across the dataset.

%Todo Explain difference with classifier and classifier free-guidance

\section{Extended Results and Implementation Details}
\label{apd:extensive_res}

To rigorously evaluate the performance of targeted unlearning, we report results across three complementary axes: \textbf{retention}, \textbf{forgetting}, and \textbf{efficiency}. This section formally defines each metric, motivates its inclusion, and provides implementation details for reproducibility.

\paragraph{Maximum Mean Discrepancy (MMD$_\text{retain}$).}  
We compute MMD$_\text{retain}$ between generated samples and a held-out subset of retained data to assess distributional alignment. Formally, for two sets $X = \{x_i\}$ and $Y = \{y_j\}$ of generated and retained samples, we estimate:
\[
\text{MMD}^2(X, Y) = \frac{1}{n^2} \sum_{i,i'} k(x_i, x_{i'}) + \frac{1}{m^2} \sum_{j,j'} k(y_j, y_{j'}) - \frac{2}{nm} \sum_{i,j} k(x_i, y_j),
\]
using an RBF kernel with fixed bandwidth $\sigma = 1.0$. We report the mean and standard deviation of this value across multiple randomized evaluations with $n = m = 1000$.

\paragraph{Retention Accuracy.}  
We evaluate a pretrained binary classifier on real retained samples and report the percentage of correctly predicted labels. This measures the extent to which information about the preserved content is retained by the model, and serves as a proxy for functionality preservation.

\paragraph{Forget Rate.}  
To assess how effectively the model suppresses the forgotten content, we measure the proportion of generated samples that the classifier assigns to the forget class. A low forget rate indicates fewer instances of undesired generation and stronger unlearning.

\paragraph{Leakage Score.}  
While the forget rate captures coarse-level suppression, the leakage score quantifies fine-grained semantic resemblance to forgotten content. Specifically, we compute the classifier’s average confidence on the forget class over generated samples. 

\paragraph{Training Time.}  
Training time is recorded as wall-clock duration required to complete a fixed number of optimization steps. Inference time is measured as the average per-sample generation time (in milliseconds) across 5000 samples using $n_\text{steps} = 10$ flow integration steps.

Table~\ref{tab:extended2d} reports a comprehensive comparison of unlearning performance across four 2D datasets: \textit{Circles}, \textit{Checkerboard}, \textit{Moons}, and \textit{6 Gaussians}. Each experiment is repeated with three independent classifier runs to account for variability, and all reported values represent the mean and standard deviation across these trials.

\begin{table*}[h]
\centering
\small
\caption{Comparison of unlearning performance across methods and 2D datasets.}
\label{tab:extended2d}
\resizebox{\textwidth}{!}{
\begin{tabular}{llcccccc}
\toprule
\textbf{Dataset} & \textbf{Method} 
& \multicolumn{2}{c}{\textbf{Retention}} 
& \multicolumn{2}{c}{\textbf{Forgetting}} 
& \multicolumn{2}{c}{\textbf{Efficiency}} \\
%\cmidrule(r){3-4} \cmidrule(r){5-6} \cmidrule(r){7-8}
& & MMD ($\downarrow$) & Accuracy ($\uparrow$) 
  & Forget Rate ($\downarrow$) & Leakage ($\downarrow$) 
  & Train Time (s) & Inference (ms) \\
\midrule

\multirow{3}{*}{Circles} 
& Retrain (GT)    & 0.0011 $\pm$ 0.0002 & 1.0000 $\pm$ 0.0000 & 0.0000 $\pm$ 0.0000 & 0.0840 $\pm$ 0.0593 & 10.50 $\pm$ 0.50 & 0.004 ms \\
& Fine-tuning     & 0.0011 $\pm$ 0.0002 & 1.0000 $\pm$ 0.0000 & 0.0000 $\pm$ 0.0000 & 0.0840 $\pm$ 0.0593 & 1.60 $\pm$ 0.20  & 0.005 ms \\
& Ours (CFlow)    & 0.0021 $\pm$ 0.0002 & 1.0000 $\pm$ 0.0000 & 0.0173 $\pm$ 0.0106 & 0.1000 $\pm$ 0.0675 & 8.20 $\pm$ 0.30  & 0.004 ms \\
\midrule

\multirow{3}{*}{Checkerboard} 
& Retrain (GT)    & 0.0136 $\pm$ 0.0002 & 0.9996 $\pm$ 0.0001 & 0.0004 $\pm$ 0.0002 & 0.0006 $\pm$ 0.0001 & 13.50 $\pm$ 0.40 & 0.004 ms \\
& Fine-tuning     & 0.0136 $\pm$ 0.0002 & 0.9996 $\pm$ 0.0001 & 0.0004 $\pm$ 0.0002 & 0.0006 $\pm$ 0.0001 & 2.20 $\pm$ 0.30  & 0.005 ms \\
& Ours (CFlow)    & 0.0063 $\pm$ 0.0004 & 0.9996 $\pm$ 0.0002 & 0.0002 $\pm$ 0.0000 & 0.0002 $\pm$ 0.0001 & 9.30 $\pm$ 0.50  & 0.004 ms \\
\midrule

\multirow{3}{*}{Moons} 
& Retrain (GT)    & 0.0179 $\pm$ 0.0002 & 1.0000 $\pm$ 0.0000 & 0.0005 $\pm$ 0.0001 & 0.0142 $\pm$ 0.0039 & 17.20 $\pm$ 0.60 & 0.005 ms \\
& Fine-tuning     & 0.0179 $\pm$ 0.0002 & 1.0000 $\pm$ 0.0000 & 0.0005 $\pm$ 0.0001 & 0.0142 $\pm$ 0.0039 & 2.90 $\pm$ 0.30  & 0.006 ms \\
& Ours (CFlow)    & 0.0194 $\pm$ 0.0009 & 1.0000 $\pm$ 0.0000 & 0.0142 $\pm$ 0.0020 & 0.0199 $\pm$ 0.0007 & 13.60 $\pm$ 0.50 & 0.004 ms \\
\midrule

\multirow{3}{*}{6 Gaussians} 
& Retrain (GT)    & 0.0303 $\pm$ 0.0001 & 1.0000 $\pm$ 0.0000 & 0.0018 $\pm$ 0.0003 & 0.0031 $\pm$ 0.0015 & 23.10 $\pm$ 0.80 & 0.005 ms \\
& Fine-tuning     & 0.0303 $\pm$ 0.0001 & 1.0000 $\pm$ 0.0000 & 0.0018 $\pm$ 0.0003 & 0.0031 $\pm$ 0.0015 & 3.70 $\pm$ 0.40  & 0.005 ms \\
& Ours (CFlow)    & 0.0370 $\pm$ 0.0010 & 1.0000 $\pm$ 0.0000 & 0.0302 $\pm$ 0.0083 & 0.0338 $\pm$ 0.0083 & 19.00 $\pm$ 0.60 & 0.004 ms \\
\bottomrule
\end{tabular}
}
\end{table*}

\subsection{Experimental Settings}
\textbf{MNIST:} The quantitative results reported in Table~\ref{tab:results} are based on a binary MNIST task where the retain set \(D_{\text{retain}}\) includes even digits (0, 2, 4, 6, 8) and the forget set \(D_{\text{forget}}\) includes odd digits (1, 3, 5, 7, 9).

In Figure~\ref{fig:mnist_full}, we visualize generation behavior in a different, more constrained setting. Here, \(D_{\text{retain}}\) consists solely of digit ``0'', and \(D_{\text{forget}}\) includes digits ``1--9''. As shown in the top row, increasing the energy scaling parameter \(\lambda\) progressively suppresses the forget set, leading the model to generate only digit ``0''. In the bottom row, the energy function is reversed to penalize digit ``0'' while retaining digits ``1--9''. As \(\lambda\) increases, generation shifts away from digit ``0'', demonstrating the model’s ability after training to exclude specific classes while preserving others.

\begin{figure}[hbtp]
    \centering
    \includegraphics[width=0.7\linewidth]{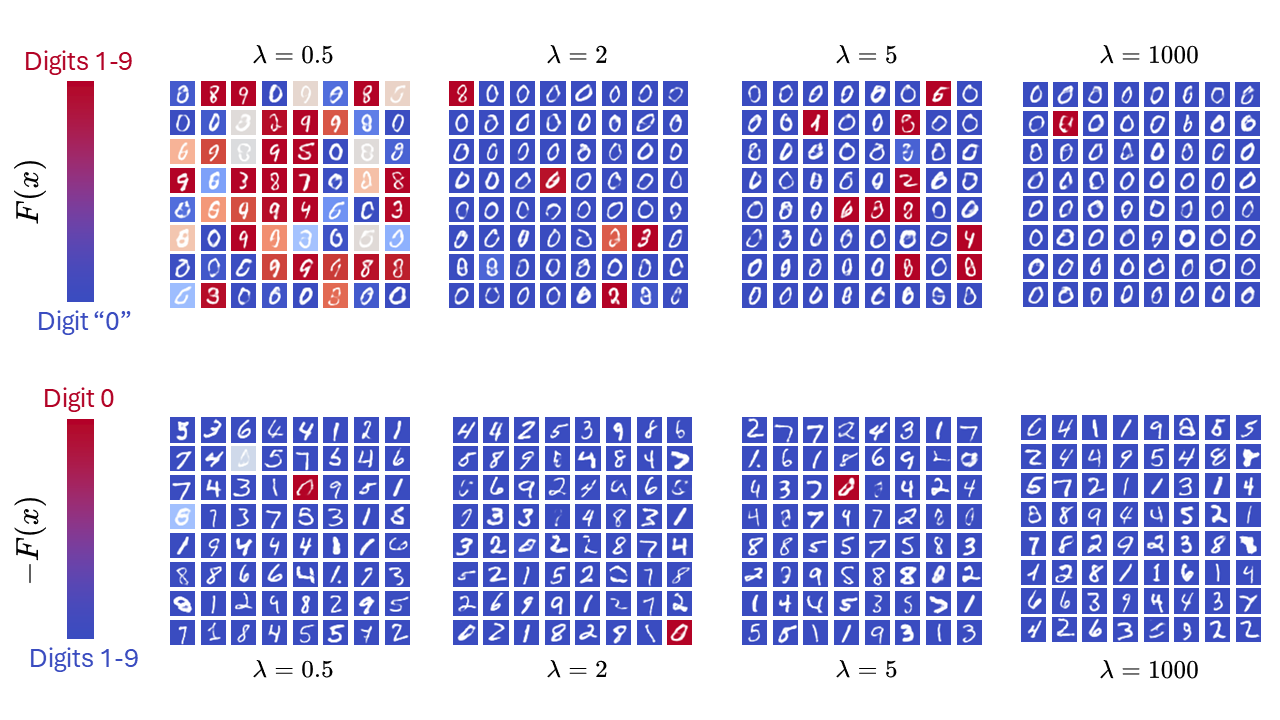}
        \caption{\textbf{Effect of suppression factor \(\lambda\).} \textbf{Top row:} The energy function \(F(z)\) is low on \(D_{\text{retain}}\) (digit ``0'') and high on \(D_{\text{forget}}\) (digits ``1--9''). As \(\lambda\) increases (\(\lambda \in \{0.5, 2, 5, 1000\}\)), the model progressively suppresses \(D_{\text{forget}}\), converging toward samples from \(D_{\text{retain}}\). \textbf{Bottom row:} Reversing \(F(z)\) to penalize digit ``0'' instead, the model excludes \(D_{\text{retain}}\) as \(\lambda\) grows. This demonstrates energy-guided modulation of generation in ContinualFlow.}
    \label{fig:mnist_full}
\end{figure}

\textbf{2D Distributions:} We begin by evaluating \textit{ContinualFlow} on a set of 2D synthetic distributions commonly used in generative modeling: Circles, Moons, Gaussians, and Checkerboard. Each task involves learning a mapping from a Gaussian base distribution to the target, followed by an unlearning phase where a specific region is suppressed via an energy function \(F(x)\). These experiments provide an interpretable setting to assess the trajectory-level behavior of the model during learning and unlearning. The corresponding averaged quantitative results for these tasks are reported in Table~\ref{tab:results}. Figure~\ref{fig:multi-row-energy-flow} visualizes the evolution of samples under both phases. For each dataset, the top row shows standard flow-based generation toward the full target distribution, while the bottom row shows the result of applying energy-reweighted unlearning to suppress designated regions. 

\begin{wrapfigure}{r}{0.5\linewidth}
    \centering
    \vspace{-1em}
    \includegraphics[width=\linewidth]{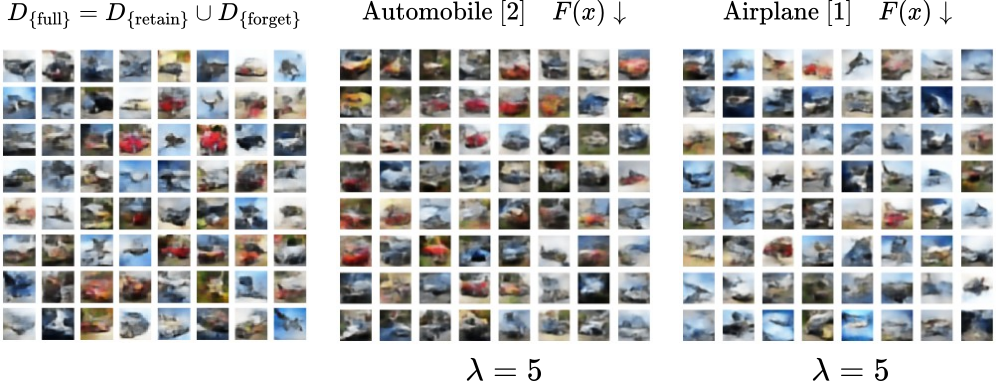}
    \caption{\textbf{CIFAR-10 latent unlearning.} Generation from the full distribution (left), and after suppressing all classes except \textit{automobile} (middle) and \textit{airplane} (right), which are assigned low energy.}
    \label{fig:cifar_flow}
\end{wrapfigure}

\textbf{CIFAR-10:} Finally, we evaluate \textit{ContinualFlow} on CIFAR-10 by operating in a 64-dimensional latent space derived from a pretrained autoencoder. This experiment tests the model’s ability to suppress specific semantic classes while retaining others, under limited latent capacity. The full distribution \(D_{\text{full}} = D_{\text{retain}} \cup D_{\text{forget}}\) consists of a subset of two CIFAR-10 classes: \textit{automobile} (1) and \textit{airplane} (2). The left column shows samples generated by a generative model trained on this full distribution. In the center column, the model is guided to retain only the \textit{automobile} class by assigning it low energy and suppressing all other modes. The right column mirrors this by retaining only the \textit{airplane} class.

\begin{figure}[h]
  \centering

  % --- Row 1 ---
  \begin{minipage}[c]{0.23\textwidth}
    \includegraphics[width=\textwidth]{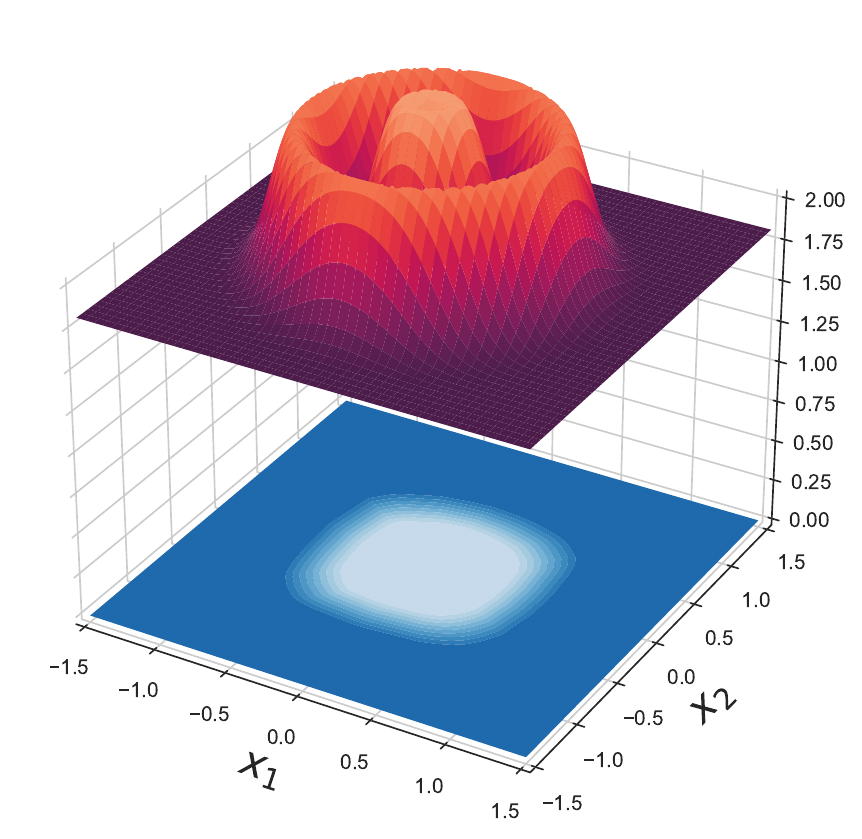}
  \end{minipage}%
  \hfill
  \begin{minipage}[c]{0.7\textwidth}
    \includegraphics[width=\textwidth]{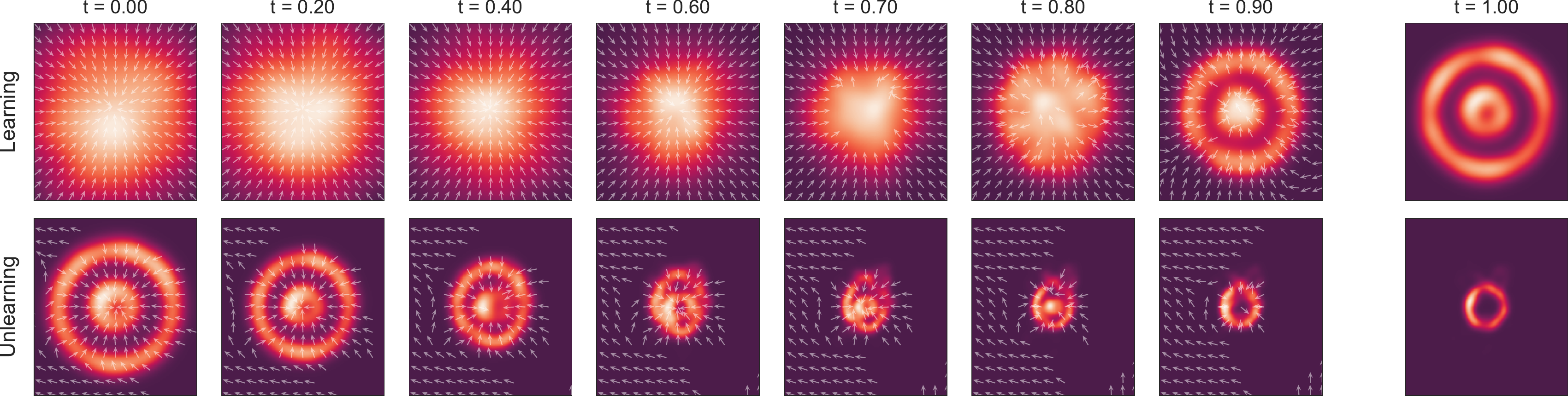}
  \end{minipage}

  \vspace{0.6em}  % Adjust vertical spacing between rows

  % --- Row 2 ---
  \begin{minipage}[c]{0.23\textwidth}
    \includegraphics[width=\textwidth]{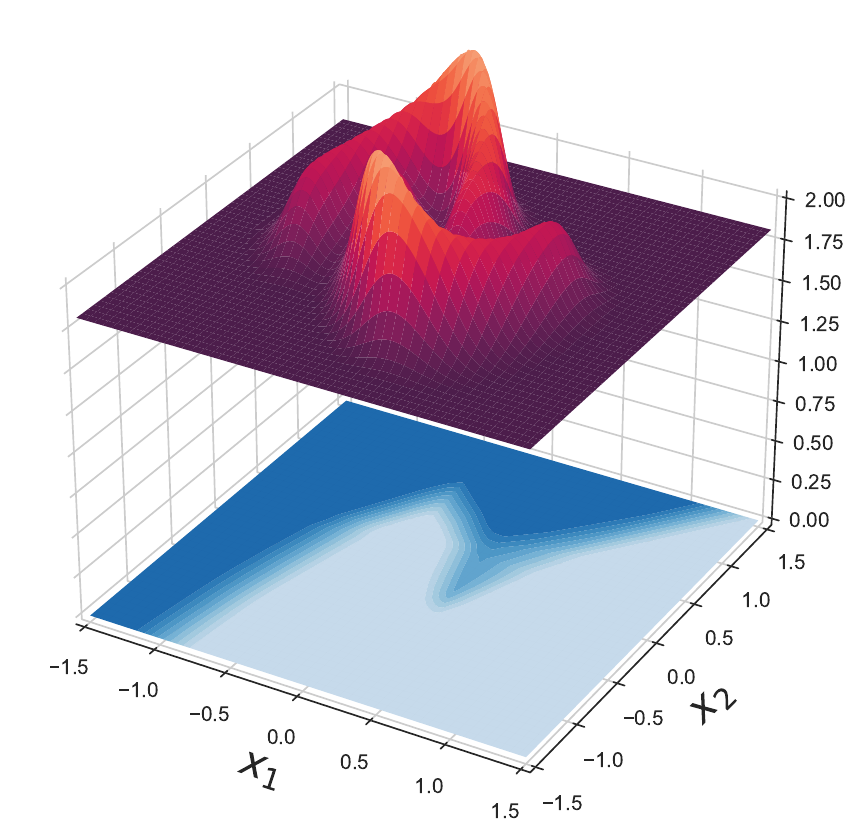}
  \end{minipage}%
  \hfill
  \begin{minipage}[c]{0.7\textwidth}
    \includegraphics[width=\textwidth]{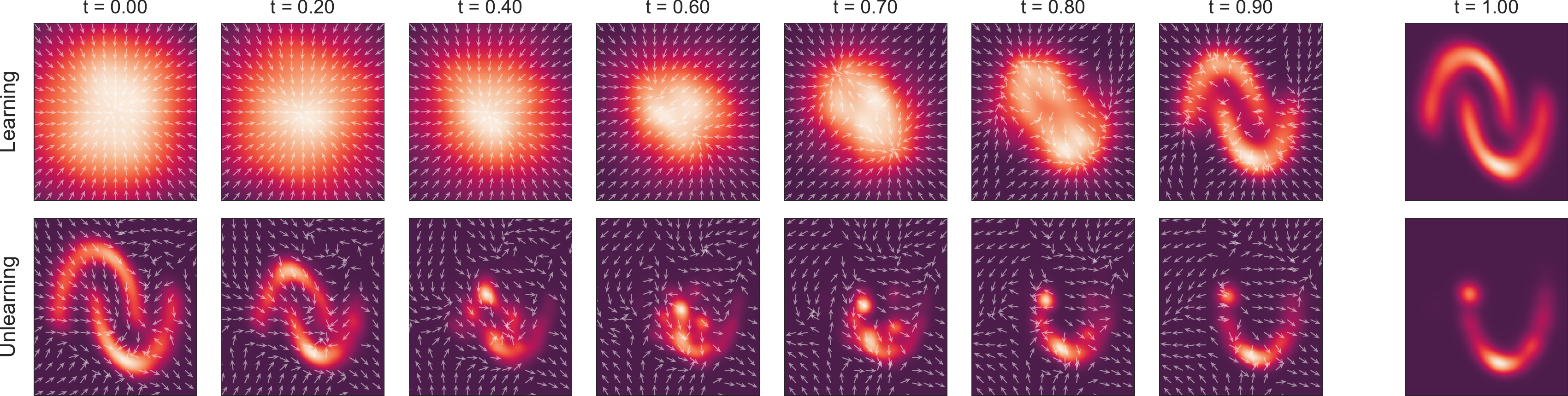}
  \end{minipage}

  \vspace{0.6em}

  % --- Row 3 ---
  \begin{minipage}[c]{0.23\textwidth}
    \includegraphics[width=\textwidth]{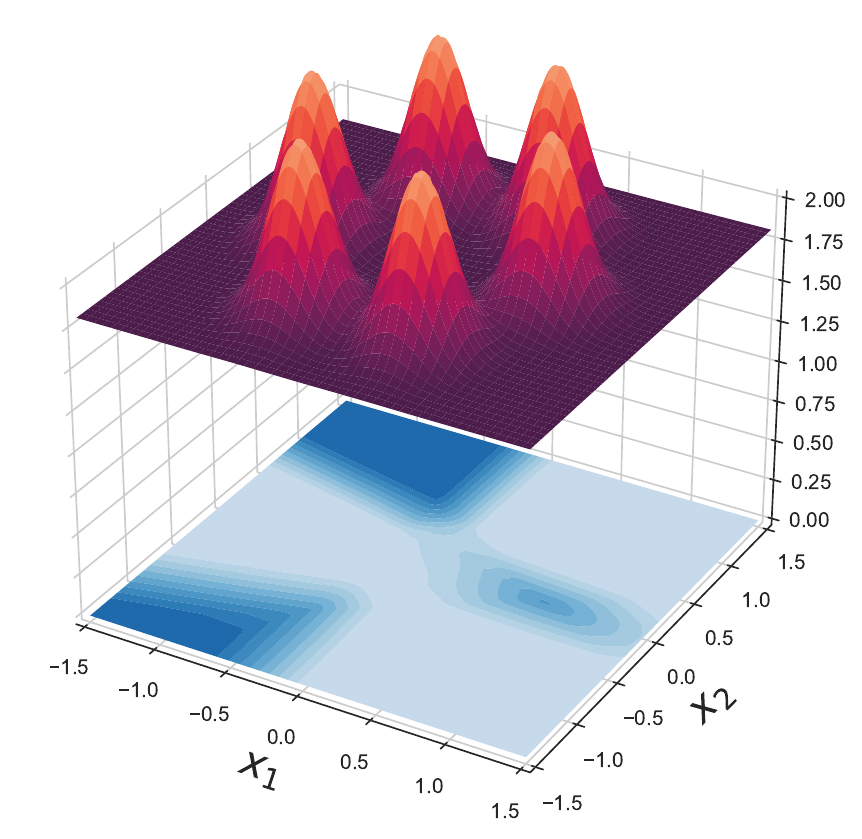}
  \end{minipage}%
  \hfill
  \begin{minipage}[c]{0.7\textwidth}
    \includegraphics[width=\textwidth]{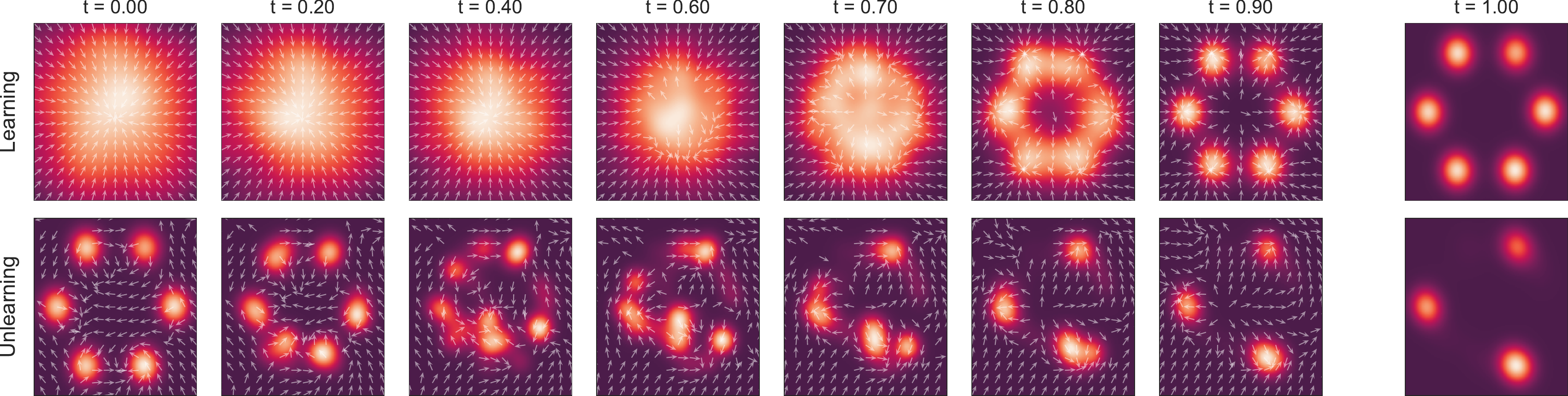}
  \end{minipage}

  \vspace{0.6em}

  % --- Row 4 ---
  \begin{minipage}[c]{0.23\textwidth}
    \includegraphics[width=\textwidth]{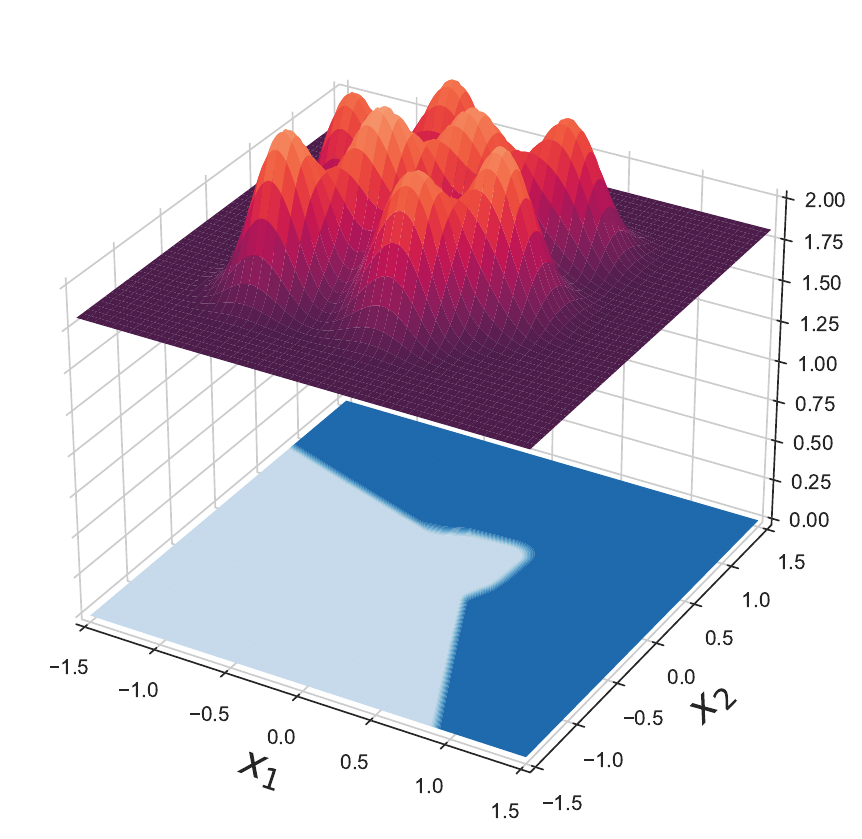}
  \end{minipage}%
  \hfill
  \begin{minipage}[c]{0.7\textwidth}
    \includegraphics[width=\textwidth]{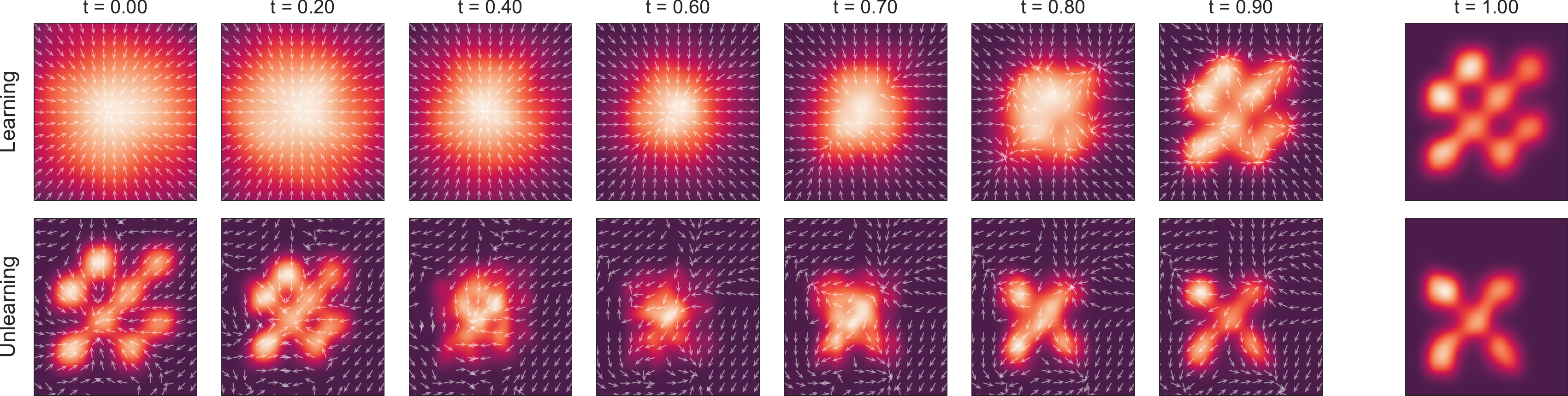}
  \end{minipage}

 %TODO Remove Grids and ticks from 3D plot
  \caption{The left panel shows the target density as a surface plot, with the corresponding energy function $F(x)$ rendered on the floor plane. The right panel illustrates the learned and unlearned trajectories for four 2D benchmarks: Circles, Moons, 6 Gaussians, and Checkerboard. The parameter $\lambda = 5$ is used for each experiment.}
  \label{fig:multi-row-energy-flow}
\end{figure}

\clearpage
\subsection{Hardware Specifications}
\label{apd:hardware}

All experiments were conducted using NVIDIA GPUs with CUDA-enabled PyTorch acceleration. For low-dimensional 2D synthetic benchmarks we employed a local workstation equipped with an NVIDIA GeForce RTX 3070 GPU and 8GB of VRAM. For experiments in the image domain, we utilized an NVIDIA L4 GPU with 24GB of VRAM, hosted in a cloud compute environment. This enabled the training of convolutional encoder–decoder pairs and larger latent flows required for modeling high-dimensional data distributions.

\begin{figure}[h]
\centering
\begin{tikzpicture}[
    font=\footnotesize, % Match with other TikZ figures
    col1/.style={text width=3.2cm, align=left, font=\bfseries, draw, minimum height=0.5cm, fill=gray!5, inner sep=2pt},
    col2/.style={text width=3.5cm, align=center, draw, minimum height=0.5cm, fill=blue!2, inner sep=2pt},
    col3/.style={text width=3.5cm, align=center, draw, minimum height=0.5cm, fill=blue!2, inner sep=2pt},
    header/.style={fill=gray!20, font=\bfseries\footnotesize, minimum height=0.5cm}
  ]

% Matrix
\matrix (table) [matrix of nodes, row sep=0.02cm, column sep=0pt,
    column 1/.style=col1,
    column 2/.style=col2,
    column 3/.style=col3,
    nodes in empty cells] {
  \node[header] {Property};       & \node[header] {2D Experiments}; & \node[header] {Image Experiments}; \\
  GPU            & NVIDIA RTX 3070 & NVIDIA L4 \\
  CUDA Version   & 12.4                    & 12.4 \\
  Driver Version & 552.22                  & 550.54.15 \\
  VRAM           & 8GB                     & 24GB \\
};

\end{tikzpicture}
\caption{Hardware specifications used for 2D and image-domain experiments.}
\end{figure}

\subsection{Architectural Details for Latent-Space Image Generation}
\label{apd:arch}

For image-based experiments on MNIST and CIFAR-10, we employ a modular autoencoder–flow pipeline composed of a convolutional encoder, a transposed-convolutional decoder, and a conditional latent flow model. The encoder compresses the input image into a compact latent vector $z \in \mathbb{R}^d$, which serves as a transport space for learning and unlearning tasks. Sampling proceeds by first generating latent codes from a base Gaussian via conditional flow, followed by energy-reweighted transformation, and decoding to the image domain.

\begin{figure}[h]
\centering
\scalebox{0.82}{
\begin{tikzpicture}[
    layer/.style={
        draw,
        minimum width=4cm,
        text width=4cm,
        minimum height=0.65cm,
        align=center,
        fill=blue!5,
        font=\footnotesize
    },
    header/.style={font=\bfseries\small},
    arrow/.style={->, thick},
    dashedarrow/.style={->, thick, dashed},
    node distance=0.4cm
  ]

% === MNIST LEFT ===
\node[header] (m0) at (0, 0) {MNIST (Grayscale, 28$\times$28)};

\node[layer, below=0.4cm of m0] (m1) {Conv2D 1$\rightarrow$32\\4$\times$4, stride=2 + ReLU};
\node[layer, below=of m1] (m2) {Conv2D 32$\rightarrow$64\\4$\times$4, stride=2 + ReLU};
\node[layer, below=of m2] (m3) {Flatten + Linear\\3136$\rightarrow d$};
\node[layer, below=of m3, fill=gray!10] (mz) {Latent $z$};
\node[layer, below=of mz] (m4) {Linear $d\rightarrow$3136 + ReLU};
\node[layer, below=of m4] (m5) {Unflatten + ConvT\\64$\rightarrow$32 + ReLU};
\node[layer, below=of m5] (m6) {ConvT 32$\rightarrow$1 + Tanh};

\foreach \a/\b in {m1/m2, m2/m3, m3/mz, mz/m4, m4/m5, m5/m6}
    \draw[arrow] (\a) -- (\b);

% === CIFAR-10 RIGHT ===
\node[header, right=6.5cm of m0] (c_header) {CIFAR-10 (RGB, 32$\times$32)};
\node[layer, below=0.4cm of c_header] (c1) {2× Conv2D\\3$\rightarrow$64 + ReLU + MaxPool};
\node[layer, below=of c1] (c2) {Conv2D 64$\rightarrow$128 + ReLU + MaxPool};
\node[layer, below=of c2] (c3) {Conv2D 128$\rightarrow$256 + ReLU + MaxPool};
\node[layer, below=of c3] (c4) {Conv2D 256$\rightarrow$512 + ReLU};
\node[layer, below=of c4] (c5) {Flatten + Linear\\8192$\rightarrow d$};
\node[layer, below=of c5, fill=gray!10] (cz) {Latent $z$};
\node[layer, below=of cz] (c6) {Linear $d\rightarrow$8192 + Unflatten\\(512,4,4)};
\node[layer, below=of c6] (c7) {ConvT 512$\rightarrow$256 → Conv 256};
\node[layer, below=of c7] (c8) {ConvT 256$\rightarrow$128 → Conv 128};
\node[layer, below=of c8] (c9) {ConvT 128$\rightarrow$64 → Conv 64};
\node[layer, below=of c9] (c10) {Conv 64$\rightarrow$3 + Tanh};

\foreach \a/\b in {c1/c2, c2/c3, c3/c4, c4/c5, c5/cz, cz/c6, c6/c7, c7/c8, c8/c9, c9/c10}
    \draw[arrow] (\a) -- (\b);

% === FLOW ARROWS (Rightward) ===
\draw[dashedarrow] (mz.east) -- ++(1.5,0) node[midway, above] {\small Flow};
\draw[dashedarrow] (cz.east) -- ++(1.5,0) node[midway, above] {\small Flow};

\end{tikzpicture}
}
\caption{Comparative architecture diagram of the encoder–decoder models designed for MNIST and CIFAR-10 experiments.}
\end{figure}
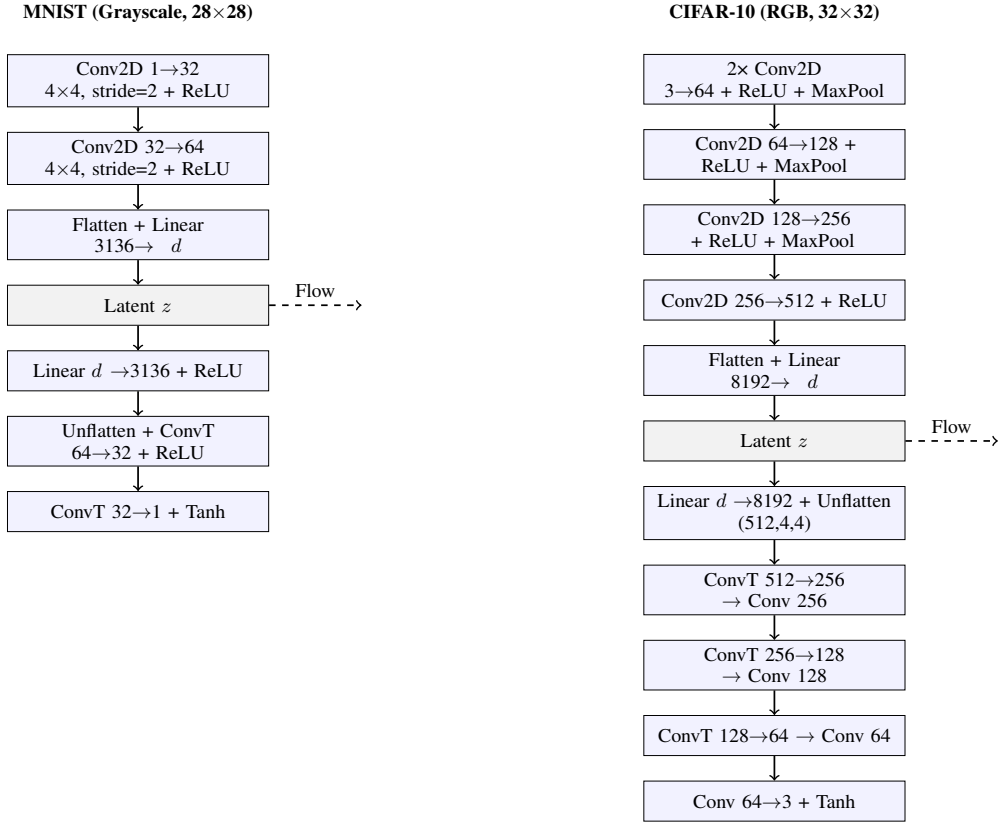

%paragraph{Latent Usage.}
%For both datasets, the encoder transforms input images $x \in \mathbb{R}^{C\times H\times W}$ into a latent vector $z \in \mathbb{R}^{d}$ (default: $d=64$ for MNIST, $d=256$ for CIFAR-10). A first conditional flow model is trained to transport samples from a base prior $\mathcal{N}(0, I)$ to the encoded latent distribution $q_0(z)$.

\clearpage

\end{document}